\definecolor{mycolor1}{rgb}{0.32157,0.32157,0.32157}%
\newtheorem{theorem}{Theorem}
\newtheorem{lemma}{Lemma}
\DeclareMathOperator{\Tr}{Tr}
\newenvironment{hproof}{%
  \proof}{\endproof}
\newcommand*\samethanks[1][\value{footnote}]{\footnotemark[#1]}
\title{Learning Interpretable Disease\\Self-Representations for Drug Repositioning}
\author{%
  \begin{tabular}{ c c c }
  Fabrizio Frasca$^{1,2}\thanks{Equal contribution, authors are in alphabetical order.}$ & Diego Galeano$^{3}$\samethanks
  & Guadalupe Gonzalez$^{4}$\\[1mm]
  Ivan Laponogov$^{4}$
  & Kirill Veselkov$^{4}$
  & Alberto Paccanaro$^{3}$\\[1mm]
  ~~~ & Michael M. Bronstein$^{1,2,4}$ & ~~~
  \end{tabular}
  \vspace{2mm}\\
  \begin{tabular}{ c c c c }
   $^1$USI Lugano & $^2$Twitter & $^3$Royal Holloway & $^4$Imperial College \\ 
   Switzerland & United Kingdom & United Kingdom & United Kingdom
  \end{tabular}
\\
}
\begin{document}

\maketitle

\vspace{-2mm}
\begin{abstract}
    Drug repositioning is an attractive cost-efficient strategy for the development of treatments for human diseases. Here, we propose an interpretable model that learns disease self-representations for drug repositioning. Our self-representation model represents each disease as a linear combination of a few other diseases. We enforce proximity in the learnt representations in a way to preserve the geometric structure of the human phenome network --- a domain-specific knowledge that naturally adds relational inductive bias to the disease self-representations. We prove that our method is globally optimal and show results outperforming state-of-the-art drug repositioning approaches. We further show that the disease self-representations are biologically interpretable.

\end{abstract}

\section{Introduction}
New drug discovery and development presents several challenges including high attrition rates, long development times, and substantial costs~\citep{Ashburn2004}. Drug repositioning, the process of finding new therapeutic indications for already marketed drugs, has emerged as a promising alternative to new drug development.  It involves the use of de-risked compounds in human, which translates to lower  development costs and shorter development times~\citep{Pushpakom2018}.



A wide range of computational approaches has been proposed to predict novel indications for existing drugs. The common assumption underlying these methods is that there is biological or pharmacological relational information between drugs, e.g., chemical structure, and between diseases, e.g., disease phenotypes, that can be exploited for the prediction task. Current methods typically rely on well-defined heuristics and/or hand-crafted features. For instance, the PREDICT~\citep{Gottlieb2011} model 
extracts drug-disease features from multiple similarity measures, and then trains a logistic regression classifier to  predict novel drug indications. Similarly,~\citep{Napolitano2013} obtains features from heterogeneous drug similarity measures and then trains an SVM classifier to predict therapeutic indications of drugs. Other approaches include random walks on bipartite networks~\citep{Luo2016}, and low-rank matrix completion-based approaches~\citep{Luo2018}. 


In this paper, we propose a self-representation learning model for drug repositioning that is able to overcome the limitations of heuristic-based approaches and extend the expressiveness of low-rank factorisation models for matrix completion. Our model builds upon the recent development of high-rank matrix completion based on self-expressive models (SEM)~\citep{elhamifar2016high}, as well as the recent trend of deep learning on graphs~\citep{bronstein2017geometric,monti2017geometric,hamilton2017representation}. 
%
We propose a geometric SEM model that integrates relational inductive bias about genetic diseases in the form of a disease phenotype similarity graph. 
Extensive experiments on a standard benchmark dataset show that our method outperforms existing state-of-the-art approaches in drug repositioning, and that the inclusion of relational inductive bias significantly improves the performance while enhancing model interpretability.

\section{Geometric Self-Expressive Models (GSEM)}

\subsection{Regularisation Framework}

The goal of self-expressive models (SEM) is to represent datapoints, i.e., diseases, approximately as a linear combination of a small number of other datapoints. Proposed as a framework for simultaneously clustering and completing high-dimensional data lying over the union of low-dimensional subspaces~\citep{fan2017matrix, wang2018high}, these models can effectively generalise standard low-rank matrix completion models. Here we cast drug repositioning as a high-rank matrix completion task and propose a model that naturally extends SEM by imposing a relational inductive prior between datapoints. Because of the geometrical structure enforced between datapoints, we refer to our model as \emph{Geometric SEM} (GSEM).


Let us denote our drug disease matrix for $n$ drugs and $m$ diseases with the matrix $X \in \mathbb{R}^{n\times m}$ (each column is a datapoint) where $x_{ij} = 1$ if drug $i$ is associated with disease $j$, and zero otherwise. GSEM aims at \emph{learning} a  sparse zero-diagonal self-representation matrix $C \in \mathbb{R}^{m\times m}$ of coefficients for diseases such that $\hat{X} \approx  X C $, where the null diagonal aims at ruling out the trivial solution $C = I$. 
%



To this end, we minimise the following cost function:

\begin{equation}\label{eq:GSEM}
\begin{split}
&\min_{C \geq 0}\,\,\, \underbrace{\frac{1}{2} \Vert X-XC\Vert_F^2}_{\text{self-representation}} + \underbrace{
\frac{\beta}{2}\Vert C\Vert_F^2 + \lambda \Vert C\Vert_1}_{\text{sparsity}} 
+ \underbrace{\frac{\alpha}{2} \Vert C\Vert_{\mathcal{D},\mathcal{G}}^2}_{\text{smoothness}} + \underbrace{\gamma \Tr(C)}_{\text{null diagonal}} 
\end{split}
\end{equation}
where $\Vert .\Vert_F$ denotes the Frobenius norm, and $ \Vert C\Vert_{\mathcal{D},G}$ 
the Dirichlet norm on the graph $\mathcal{G} = (\{1,\hdots, m\},\mathcal{E},W)$, i.e., the weighted undirected graph with edge weights $w_{ij} > 0$ if $(i,j) \in \mathcal{E}$ and zero otherwise, representing the similarities between diseases. We further denote the cost function in Eq. (\ref{eq:GSEM}) by $\mathcal{Q}(C)$. 
The first term in Eq. (\ref{eq:GSEM}) is the \emph{self-representation constraint}, which aims at learning a matrix of coefficients $C$ such that $XC$ is a good reconstruction of the original matrix $X$. The second term is the \emph{sparsity constraint}, which uses the elastic-net regularisation 
known to impose sparsity and grouping-effect 
\citep{ng2004feature, zou2005regularization}. 
The fourth term is a penalty for diagonal elements to prevent the trivial solution $C = I$ by imposing $\mathrm{diag}(C) = 0$ (together with $C\geq 0$). Typically, $\gamma \gg 0$ is used. Relational inductive bias between diseases is imposed by the third {\em smoothness term} in Eq.\ref{eq:GSEM}~\citep{ma2011recommender,kalofolias2014matrix,monti2017geometric}, incorporating geometric structure into the self-representation matrix $C$ from the disease-disease similarity graph $\mathcal{G}$. Ideally, nearby points in $\mathcal{G}$ should have similar coefficients in $C$, which can be obtained by minimising:
\begin{equation}
 \sum_{i,j} G_{ij} \Vert c_i - c_j\Vert^2 = \Tr(CLC^T) =  \Vert C\Vert_{\mathcal{D,G}}^2
\end{equation}
where $c_i$ and $c_j$ represent column vectors of $C$ and $L = D - W$ is the graph Laplacian. The parameter $\alpha > 0$ in Eq.\ref{eq:GSEM} weighs the importance of the smoothness constraint for the prediction. Finally, to favour interpretability of the learned self-representations, we followed~\citep{lee1999learning} and also imposed a \emph{non-negativity constraint} on $C$.  


\subsection{The multiplicative learning algorithm}

To minimise Eq.~\ref{eq:GSEM}, we developed an efficient multiplicative learning algorithm with theoretical guarantees of convergence. Our algorithm consists in iteratively applying the following rule:

\begin{equation} \label{eq:algoGSEM}
   c_{ij} \leftarrow  c_{ij}  \frac{(X^\top X +  \alpha CW)_{ij}}{(X^\top XC +  \alpha CD + \beta C + \lambda + \gamma I)_{ij}}    
\end{equation}  
 
We shall prove that the cost function $\mathcal{Q}(C)$ is convex and therefore, our multiplicative rule in Eq. (\ref{eq:algoGSEM}) converges to a {\em global minimum}.

\begin{lemma} \label{lemma:convexity}
The cost function $\mathcal{Q}(C)$ in Eq. \ref{eq:GSEM} is convex in the feasible region $C \geq 0$.
\end{lemma}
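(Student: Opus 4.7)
The plan is to show convexity of $\mathcal{Q}(C)$ by decomposing it into its five constituent terms and arguing that each is convex on all of $\mathbb{R}^{m\times m}$; then convexity on the feasible region $C\geq 0$ is immediate since the nonnegative orthant is a convex set and sums of convex functions are convex.

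First I would inspect the two obvious terms. The penalty $\gamma\Tr(C)$ is linear in $C$, hence convex, and the elastic-net piece $\frac{\beta}{2}\|C\|_F^2 + \lambda\|C\|_1$ is a nonnegative combination of a strictly convex quadratic (the squared Frobenius norm) and the convex $\ell_1$ norm, hence convex. Next I would handle the self-representation term $f_1(C)=\frac{1}{2}\|X-XC\|_F^2$: since $C\mapsto X-XC$ is affine and $\|\cdot\|_F^2$ is convex, $f_1$ is the composition of a convex function with an affine one, therefore convex. Equivalently, writing $c=\mathrm{vec}(C)$, one obtains the Hessian $I_m\otimes X^\top X\succeq 0$.

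The smoothness term $f_2(C)=\frac{\alpha}{2}\|C\|_{\mathcal{D},\mathcal{G}}^2$ is the main thing that requires a short argument. Using the identity stated before the lemma,
\begin{equation*}
f_2(C)=\frac{\alpha}{2}\sum_{i,j}G_{ij}\|c_i-c_j\|^2,
\end{equation*}
where $c_i,c_j$ are columns of $C$ and $G_{ij}\geq 0$. Each summand is the squared Euclidean norm of an affine function of $C$, hence convex, and nonnegative linear combinations preserve convexity. Equivalently one may write $f_2(C)=\frac{\alpha}{2}\Tr(CLC^\top)$ with $L=D-W\succeq 0$ the graph Laplacian, whose vectorised Hessian $\alpha\,(L\otimes I_m)$ is PSD.

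Adding the five pieces, $\mathcal{Q}$ is convex on $\mathbb{R}^{m\times m}$ and in particular on $\{C:C\geq 0\}$. I do not expect any serious obstacle: the only subtlety is the nondifferentiability of $\|C\|_1$, which merely forces me to phrase convexity via Jensen's inequality (or subgradients) rather than via a Hessian argument for that single term; every other term is smooth with a PSD Hessian, so the combination is convex in the standard sense.
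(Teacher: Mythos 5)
Your proof is correct and follows essentially the same route as the paper, which likewise reduces the claim to the positive semidefiniteness of $X^\top X$, $\beta I$, and the graph Laplacian $L$. Your term-by-term decomposition is in fact slightly more careful than the paper's Hessian sketch, since you explicitly handle the nondifferentiable $\lambda\Vert C\Vert_1$ term via convexity of norms rather than folding it into a second-derivative argument.
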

\begin{hproof}
We need to prove that the Hessian is a positive semi-definite (PSD) matrix. That is, for a non-zero vector $h \in \mathbb{R}^{m}$ the following condition is met $h^T\nabla^2\mathcal{Q}(C)h \geq 0$. The graph Laplacian is PSD by definition. The remaining terms in the Hessian ($X^\top X + \beta I$) are also PSD. Therefore, $\mathcal{Q}(C)$ is convex in $C \geq 0$. 
\end{hproof}

\begin{theorem}[Convergence]\label{thm:convergence}
The cost function $\mathcal{Q}(C)$ in Eq. (\ref{eq:GSEM}) converges to a global minimum under the multiplicative update rule in (\ref{eq:algoGSEM}).    
\end{theorem}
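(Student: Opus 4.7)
The plan is to follow the auxiliary-function strategy of Lee and Seung (as used for non-negative matrix factorisation), leveraging Lemma~\ref{lemma:convexity} at the end to upgrade ``local minimum'' to ``global minimum''. The argument has three pillars: (i) the multiplicative rule satisfies the KKT stationarity conditions of the constrained problem at its fixed points; (ii) the cost $\mathcal{Q}$ decreases monotonically along iterates; (iii) the sequence of costs is bounded below (trivially, since every term of $\mathcal{Q}$ is non-negative on $C\geq 0$). Combined with Lemma~\ref{lemma:convexity}, these give convergence to a global optimum.

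I would first derive the KKT conditions for $\min_{C\geq 0}\mathcal{Q}(C)$. Computing $\nabla\mathcal{Q}(C)=X^\top XC - X^\top X + \beta C + \lambda\mathbf{1}\mathbf{1}^\top\!\cdot\!\text{(entrywise)} + \alpha C L + \gamma I$ and pairing with multipliers $\mu_{ij}\geq 0$ for $c_{ij}\geq 0$, complementary slackness yields $\nabla\mathcal{Q}(C)_{ij}\,c_{ij}=0$. Splitting $L=D-W$ and using $\mathrm{diag}(C)\cdot\gamma I$ only affecting diagonal entries, this rewrites as
\begin{equation*}
\bigl(X^\top XC + \alpha CD + \beta C + \lambda + \gamma I\bigr)_{ij}\,c_{ij} \;=\; \bigl(X^\top X + \alpha CW\bigr)_{ij}\,c_{ij},
\end{equation*}
which is exactly the fixed-point equation of the update~(\ref{eq:algoGSEM}). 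Thus any fixed point satisfies the KKT system; by Lemma~\ref{lemma:convexity} and the non-negativity of all terms appearing in the denominator (ensuring the update is well-defined for $C>0$), the KKT conditions are both necessary and sufficient for a global minimum.

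The main technical obstacle is showing monotonic descent $\mathcal{Q}(C^{(t+1)})\leq\mathcal{Q}(C^{(t)})$. I would construct an auxiliary function $G(C,C')$ satisfying $G(C,C)=\mathcal{Q}(C)$ and $G(C,C')\geq\mathcal{Q}(C)$ such that $C^{(t+1)}=\arg\min_C G(C,C^{(t)})$ coincides with the multiplicative rule. Each term of $\mathcal{Q}$ admits a standard upper bound: for the quadratic forms $\tfrac12\|X-XC\|_F^2$, $\tfrac\beta2\|C\|_F^2$, and the $D$-part $\tfrac\alpha2\,\mathrm{Tr}(CDC^\top)$, use the Jensen-type inequality $c_{ij}^2/c'_{ij}\geq 2c_{ij}-c'_{ij}$ together with the non-negativity of $X^\top X$, $I$, and $D$, yielding separable quadratic majorisers of the form $\sum_{ij}\bigl(\text{num. at }C'\bigr)_{ij}c_{ij}^2/(2c'_{ij})+\text{linear}$. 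The linear terms $\lambda\|C\|_1$ and $\gamma\,\mathrm{Tr}(C)$ are already linear and can be passed through directly. Setting $\partial G/\partial c_{ij}=0$ then recovers precisely the ratio in~(\ref{eq:algoGSEM}).

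The delicate point is the smoothness term, because $L=D-W$ is signed: the $+CD$ contribution sits in the denominator of the update, but the $-CW$ piece must be moved to the numerator. I would handle this by majorising only the convex $\mathrm{Tr}(CDC^\top)$ part with the Jensen bound above, while lower-bounding the concave $-\mathrm{Tr}(CWC^\top)$ via its tangent at $C'$: $-\mathrm{Tr}(CWC^\top)\leq -2\,\mathrm{Tr}(CWC'^\top)+\mathrm{Tr}(C'WC'^\top)$, which is tight at $C=C'$ and contributes a linear term $-\alpha (C'W)_{ij}$ to $\partial G/\partial c_{ij}$, exactly the numerator piece. Once $G$ is verified to dominate $\mathcal{Q}$ pointwise and to be tight on the diagonal, monotonic descent follows immediately. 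Combined with boundedness from below, the sequence $\{\mathcal{Q}(C^{(t)})\}$ converges, limit points of $\{C^{(t)}\}$ satisfy the KKT system, and by Lemma~\ref{lemma:convexity} these are global minima.
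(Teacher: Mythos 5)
Your pillar (i) is, in fact, the \emph{entirety} of the paper's proof: the authors compute $\nabla\mathcal{Q}$, observe that the fixed-point equation of the multiplicative rule (\ref{eq:algoGSEM}) is exactly the complementarity condition $(\nabla\mathcal{Q}(C))_{ij}c_{ij}=0$, and invoke Lemma~\ref{lemma:convexity} to conclude that such a point is a global minimiser. They establish neither monotone descent nor that the iterates actually reach a fixed point; your pillars (ii) and (iii) are additions, and they are precisely what is needed to upgrade ``fixed points are KKT points'' to a convergence statement. So your plan subsumes the paper's argument and is the more complete route --- but it contains one genuine flaw.

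The gap is in your majoriser for the graph term. The tangent-plane upper bound $-\Tr(CWC^\top)\le \Tr(C'WC'^\top)-2\Tr(CWC'^\top)$ is equivalent to $\Tr(\Delta W\Delta^\top)\ge 0$ for $\Delta=C-C'$, i.e., it requires $W\succeq 0$. But $W$ is only assumed entrywise non-negative, and adjacency/similarity matrices are generically indefinite (a single-edge graph has eigenvalues $\pm 1$), so $-\Tr(CWC^\top)$ need not be concave and your inequality can fail. The standard repair, as in graph-regularised NMF, is to majorise $\mathcal{Q}$ coordinate-wise by the quadratic
\begin{equation*}
G(c,c'_{ij})=\mathcal{Q}_{ij}(c'_{ij})+\mathcal{Q}'_{ij}(c'_{ij})(c-c'_{ij})+\frac{\bigl(X^\top XC'+\alpha C'D+\beta C'+\lambda+\gamma I\bigr)_{ij}}{c'_{ij}}\,(c-c'_{ij})^2,
\end{equation*}
whose curvature dominates the true one, $(X^\top X)_{ii}+\alpha L_{jj}+\beta$, because $(X^\top XC')_{ij}\ge (X^\top X)_{ii}c'_{ij}$, $\alpha(C'D)_{ij}=\alpha d_{jj}c'_{ij}\ge \alpha L_{jj}c'_{ij}$ and $\beta(C')_{ij}=\beta c'_{ij}$, all by entrywise non-negativity; its minimiser is exactly the ratio in (\ref{eq:algoGSEM}). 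Two smaller caveats: the word ``lower-bounding'' in your treatment of the concave piece should read ``upper-bounding'' (the inequality you wrote is the correct direction for a majoriser), and monotone descent plus boundedness below only yields convergence of the sequence $\{\mathcal{Q}(C^{(t)})\}$; concluding that limit points of $\{C^{(t)}\}$ are fixed points requires an additional step (e.g., that $\mathcal{Q}(C^{(t+1)})=\mathcal{Q}(C^{(t)})$ forces $C^{(t+1)}=C^{(t)}$, which follows from strict convexity of the majoriser in each coordinate when the denominator is strictly positive).
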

\begin{proof} We need to show that our algorithm satisfies the Karush-Khun-Tucker (KKT) complementary conditions, which are both necessary and sufficient conditions for a global solution point given the convexity of the cost function (lemma~\ref{lemma:convexity})~\citep{li2006relationships, boyd2004convex}. KKT requires $c_{ij} \geq 0$ and $(\nabla \mathcal{Q}(C))_{ij}c_{ij} = 0$. The first condition holds with non-negative initialisation of $C$. For the second condition, the gradient is:
$
\nabla \mathcal{Q}(C) = - X^\top X - \alpha CW +  X^\top X C + \alpha CD +\beta C + \lambda + \gamma I
$, and according to the second KKT condition, at convergence $C = C^*$ we have
$
(X^\top X C^* + \alpha C^*D +\beta C^* + \lambda + \gamma I)_{ij} c^*_{ij} - (X^\top X + \alpha C^*W)_{ij} c^*_{ij} = 0
$,   
which is identical to (\ref{eq:algoGSEM}). That is, the multiplicative rule converges to a global minimum solution point.
\end{proof}


\section{Experimental Results}

\paragraph{Datasets} Drug-disease associations were obtained from the PREDICT dataset~\citep{Gottlieb2011}, a standard benchmark for computational drug repositioning. Our drug-disease matrix $X$ contains $1933$ known drug-disease associations between $593$ drugs (rows) and $313$ genetic diseases (columns). Drugs were extracted from the DrugBank database~\citep{Wishart2018} while diseases were defined using the Online Mendelian Inheritance in Man (OMIM) database~\citep{Hamosh2004}. To build the disease-disease graph $\mathcal{G}$, we used the phenotypic disease similarity constructed by~\citep{van2006text}. Our adjacency matrix $W$ consists of a dense matrix of $313\times 313$ with values ranging in the interval $[0, 1]$.  

\paragraph{Settings} Following previous approaches~\citep{Gottlieb2011,Luo2018}, we frame the computational drug repositioning task as a binary classification problem. We used a ten-fold cross-validation procedure while ensuring that all the drugs had at least one association in the training set. For each fold, we used a validation set for hyperparameter tuning ($\lambda, \beta, \alpha$ and $\gamma$). To filter irrelevant relationships between diseases, we also optimised a threshold parameter $\tau$ such that $w_{ij} = 0$ if $w_{ij} < \tau$. Given that our data matrix $X$ is highly imbalanced ($1.04\%$ density), we measured the performance of the classifier using the area under the precision recall curve (AUPR) --- which has been shown to be more informative than the AUROC in imbalanced scenarios~\citep{saito2015aupr} --- for varying values of ratio between negative ($0$s) and positive ($1$s) labels. We compared the performance of our method against PREDICT~\citep{Gottlieb2011}, a recent low-rank matrix completion model called Drug Repositioning Recommendation System (DRRS)~\citep{Luo2018} and a non-geometric version of our model (with $\alpha = 0$, SEM). For both PREDICT and DRRS, we ran the algorithms with the respective similarity measures used in their original papers. An object-oriented implementation of our model, along with datasets to reproduce our study, is available at \url{https://github.com/Noired/GSEM.git}.





\paragraph{Performance Evaluation}

Figure~\ref{fig:aupr} summarises the performance of the methods. We observed that while GSEM is only marginally better than DRRS and PREDICT in the balanced scenario (negative-to-positive ratio {\tt 1:1}, AUPR of $0.950 \pm 0.010$ for GSEM, $0.947 \pm 0.012$ for DRRS and $0.932 \pm 0.020$ for PREDICT), it greatly outperforms these competitors in imbalanced scenarios (by 1.0-13.16\% at {\tt 10:1}, 4.67-29.15\% at {\tt 30:1}, 6.48-27.60\% at {\tt 50:1} and 10.31-33.12\% at {\tt 100:1}). Although PREDICT integrates information from seven heterogeneous similarity measures, it performs poorly with high data imbalance, where, a simple SEM model, which does not use any relational prior between diseases, intriguingly outperforms PREDICT and DRRS by 1.93-23.05\% at {\tt 50:1} and 6.03-28.84\% at {\tt 100:1}.  This means that our high-rank model of $X$ based on self-representations is able to better exploit the intrinsic relationships between diseases inferred from the data matrix $X$ --- we checked that indeed the matrix $X$ has a high-rank ({\tt rank} = $238/313$). The contribution of the relational information between diseases becomes clear when comparing the performances of the GSEM versus SEM model: on average, GSEM outperforms SEM by 4.85\% AUPR across all the ratios.


\begin{figure*}[ht!]
  \centering
  \includegraphics[scale = 0.51]{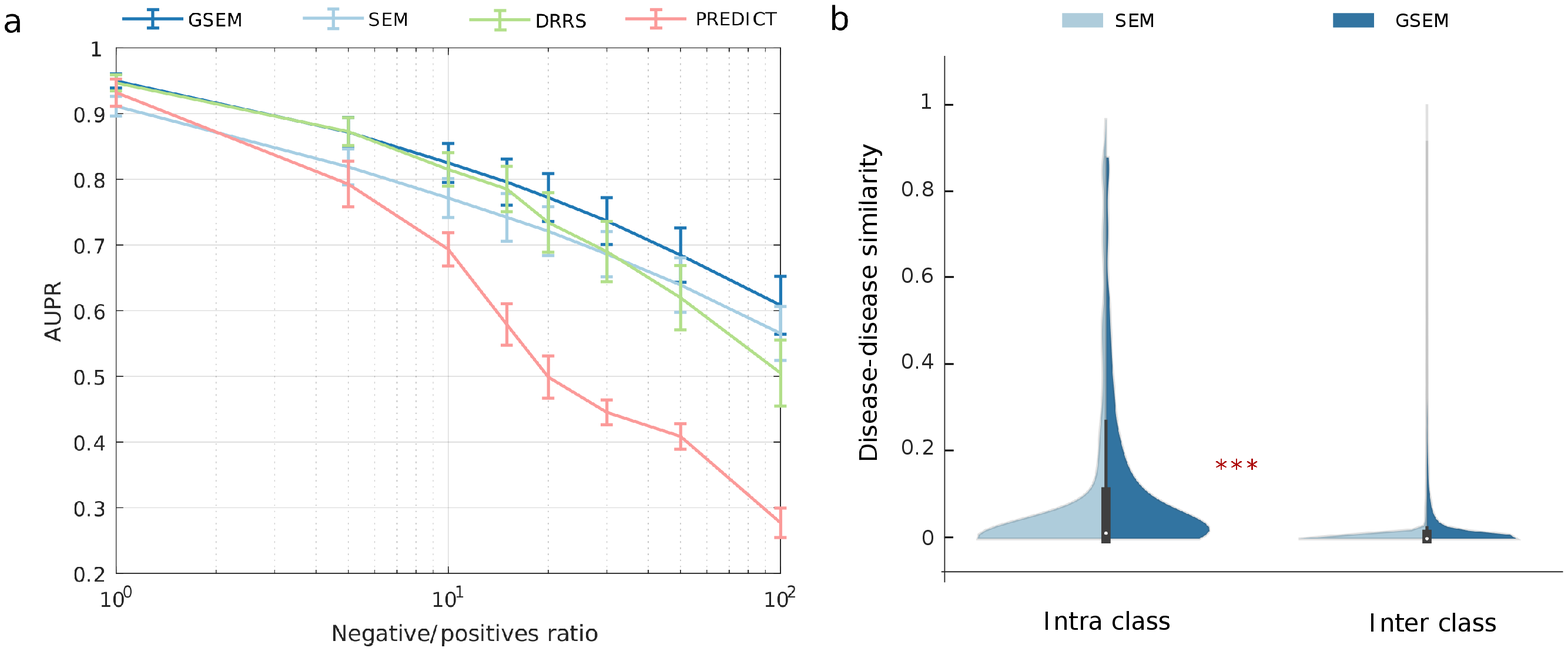}
  \caption{\textbf{(a) Drug repositioning performance}. Mean and standard deviation for AUPR metric computed using ten-fold cross-validation for varying values of negative to positive label ratios (\texttt{ratio} $\in \{1, 5, 10, 15, 20, 30, 40, 50, 100\}$). Performance of our method (GSEM) is reported, along with that of the state-of-the-art DRRS~\citep{Luo2018}, PREDICT~\citep{Gottlieb2011} and standard SEM. Our model performs best in the more realistic unbalanced scenario. \textbf{(b) Model interpretation}. Comparative violin plot of disease self-representation similarities. Diseases were grouped by well-known phenotypic classes. Significance levels between the average intra-class similarities for GSEM and SEM is indicated with asterisks ($p \leq 0.001$, $***$).}
  \label{fig:aupr}
\end{figure*}







\section{Model Interpretation}

The effectiveness of our model at predicting new disease indications for drugs prompted us to analyse whether the disease self-representations are informative of the biology underlying drug activity. We assessed model interpretability by exploring the extent to which disease self-representations were related to well-known disease classes. We retrieved disease classes from the International Classification of Diseases of the World Health Organization (WHO) (ICD10CM~\citep{WHO}). We only kept diseases belonging to a unique class, and filtered diseases belonging to a class with less than 5 diseases. For 182 diseases, we obtained 11 distinct classes (see Supplementary Materials). We then defined a disease-disease similarity as the cosine similarity of the rows of $\mathcal{S} = (C+C^T)/2$. For these experiments, we trained our model using all the available data in $X$ and fixed optimal hyper-parameters (see Supplementary Material). Figure~\ref{fig:aupr}(b) shows the violin plots of the distribution of similarities for diseases belonging to the same class (intra-class) versus diseases belonging to distinct classes (inter-class), for both GSEM and SEM. We observed that for both GSEM and SEM, the intra-class similarity is significantly higher than the inter-class similarity (Wilcoxon Sum Rank Significance, $p < 3.62\times 10^{-107}$ and $p < 3.55\times 10^{-249}$ for SEM and GSEM, respectively), meaning that both models learn biologically meaningful self-representations for diseases. However, the intra-class disease self-representation similarity is significantly higher for GSEM than for SEM (Wilcoxon Sum Rank Significance, $p < 1.23\times 10^{-174}$), suggesting that our geometric model efficiently integrates the relational phenotypic information between diseases encoded in the human phenome network. 


\section{Conclusions and Future Work}
Inherently interpretable models are critical for applications involving high-stakes decisions such as health care~\citep{rudin2019stop}. These are unfeasible to achieve with neural nets, because the learned representations depend on uninterpreted features in other layers~\citep{hinton2018deep}. Here we proposed an inherently interpretable model, GSEM, that learns self-representations for diseases. Our model effectively integrates the relational inductive bias from the human phenome network~\citep{van2006text} --- better results could possibly be achieved using the~\citep{caniza2015network} disease similarity, which has shown to be effective for the prediction of genes associated to genetic diseases~\citep{caceres2019disease}. Ongoing research includes the integration of relational inductive bias for drugs as well. 

\newpage

\section{Supplementary Material}

\subsection{Model Implementation}

\paragraph{PREDICT}
We re-implemented the PREDICT algorithm in Python 3.6. We computed seven drug-disease features using the original data and following the procedure in~\citep{Gottlieb2011}. To train the model, we used the {\tt LogisticRegression} classifier from {\tt sklearn.linear\_model} to obtain the scores for the drug-disease pairs.  

\paragraph{DRRS}
To run Drug Repositioning Recommender System~\citep{Luo2018}, we used the data and code provided in the original publication\footnote{\url{http://bioinformatics.csu.edu.cn/resources/softs/DrugRepositioning/DRRS/index.html}}. DRRS uses two complementary information for the prediction: the 2D Tanimoto chemical similarity for drugs and phenotype similarities for diseases obtained from MimMiner~\citep{van2006text}.

\paragraph{GSEM}
We implemented our proposed algorithm in Python 3.6. As model learning is based on a multiplicative learning rule similar to that of non-negative matrix factorisation (NMF)~\citep{lee1999learning, lee2001algorithms}, we followed the recommended guidelines in~\citep{berry2007algorithms} to implement it: (i) $C$ was initialised with weights sampled from a uniform distribution between $[0, b)$, with $b = 1\times10^{-2}$; (ii) a small value $\varepsilon \simeq 1\times10^{-16}$ was added to the denominator of the learning algorithm to prevent division by zero. The learning rule is iteratively applied until either of the following stopping condition is met: (i) \texttt{maxiter} iterations are completed; (ii) the relative change $\delta^{(t)}$ in the value of $C$ across two subsequent iterations 
is smaller than a predefined termination tolerance \texttt{tol}~\citep{kearfott2000stopping}: 

\begin{equation}
    \delta^{(t)} = \max_{ij} \left( \frac{ |c_{ij}^{(t+1)}-c_{ij}^{(t)}|}{\max_{(i,j)} |c_{ij}^{(t)}| + \varepsilon}\right) < \texttt{tol}
\end{equation} 

SEM was naturally obtained as a special case of GSEM by setting $\alpha=0.0$.

\subsection{Experimental Details}

\subsubsection{Hyperparameter Tuning}

GSEM hyperparameters were tuned within the grid $\alpha, \beta, \lambda \in \{0.0, 0.01, 0.1, 1.0, 10.0, 100.0\}$, $\tau \in \{0.0, 0.25, 0.65, 0.75, 0.85, 0.95\}$, with ten-fold cross-validation on the \texttt{2:1} negative-to-positive ratio setting, the one originally considered in~\citep{Gottlieb2011}. Hyperparameters for SEM were chosen likewise.

Hyperparameters optimizing average performance on validation folds were estimated to be $\alpha = 1.0, \beta = 0.1, \lambda = 0.0, \tau = 0.25$ for GSEM and $\beta = 10.0, \lambda = 0.0$ for SEM. As for model interpretation, in order to support posterior analyses we introduced sparsity by taking the largest values of $\lambda$ not compromising model performance. Accordingly, we set $\lambda = 0.01$ for GSEM and $\lambda = 1.0$ for SEM.

Penalty with coefficient $\gamma = 10^{4}$ was imposed on diagonal elements in every setting, and fitting parameters were set as $\texttt{maxiter} = 3\times10^{3}$ and $\texttt{tol} = 1\times10^{-3}$.

\subsubsection{Training and Evaluation}

DRRS~\citep{Luo2018}, GSEM and SEM are all matrix completion models and, as thus, the whole drug-disease association matrix is taken as input in training. At each step of cross-validation, test and validation folds were hidden by nullifying the associated positives entries. After model fitting, predictions over such fold entries were compared with ground truth values to evaluate performance.

PREDICT~\citep{Gottlieb2011} was learnt as a standard classification model on samples being positive and negative drug-disease associations.

\subsection{Model Interpretation}

\subsubsection{Disease Classes}

We report here below in Table~\ref{table:cat_count}, the number of disease for each of the $11$ distinct classes considered in model interpretation.

\begin{table}[ht!]
  \caption{Disease category counts}
  \label{table:cat_count}
  \centering
  \small
  \begin{tabular}{ll}
    \toprule
    \cmidrule(r){1-2}
    \textbf{Category} & \textbf{Count} \\
    \midrule
    Diseases of the nervous system & $38$ \\
    Endocrine, nutritional and metabolic diseases & $25$ \\
    Neoplasms & $25$ \\
    Diseases of the musculoskeletal system and connective tissue & $16$ \\
    Diseases of the circulatory system & $13$ \\
    Diseases of the blood(-forming) organs and certain disorders involving the immune mech. & $13$ \\
    Diseases of the skin and subcutaneous tissue & $10$ \\
    Diseases of the eye and adnexa & $9$ \\
    Diseases of the genitourinary system & $9$ \\
    Mental, Behavioral and Neurodevelopmental disorders & $7$ \\
    Symptoms, signs and abnormal clinical and laboratory findings, not elsewhere classified & $7$ \\
    Diseases of the digestive system & $5$ \\
    Certain infectious and parasitic diseases & $5$ \\
    \bottomrule
  \end{tabular}
\end{table}

\subsubsection{Disease Similarity Network}

Figure~\ref{fig:networks} depicts the disease network obtained via cosine similarity on the representations \emph{learnt} by GSEM; here we only considered sufficiently represented classes, i.e., we did not report those associated with less than $10$ diseases. From the figure, it emerges how the network presents class-consistent clustering patterns.

\begin{figure*}[!t]
  \centering
  \includegraphics[scale = 0.60]{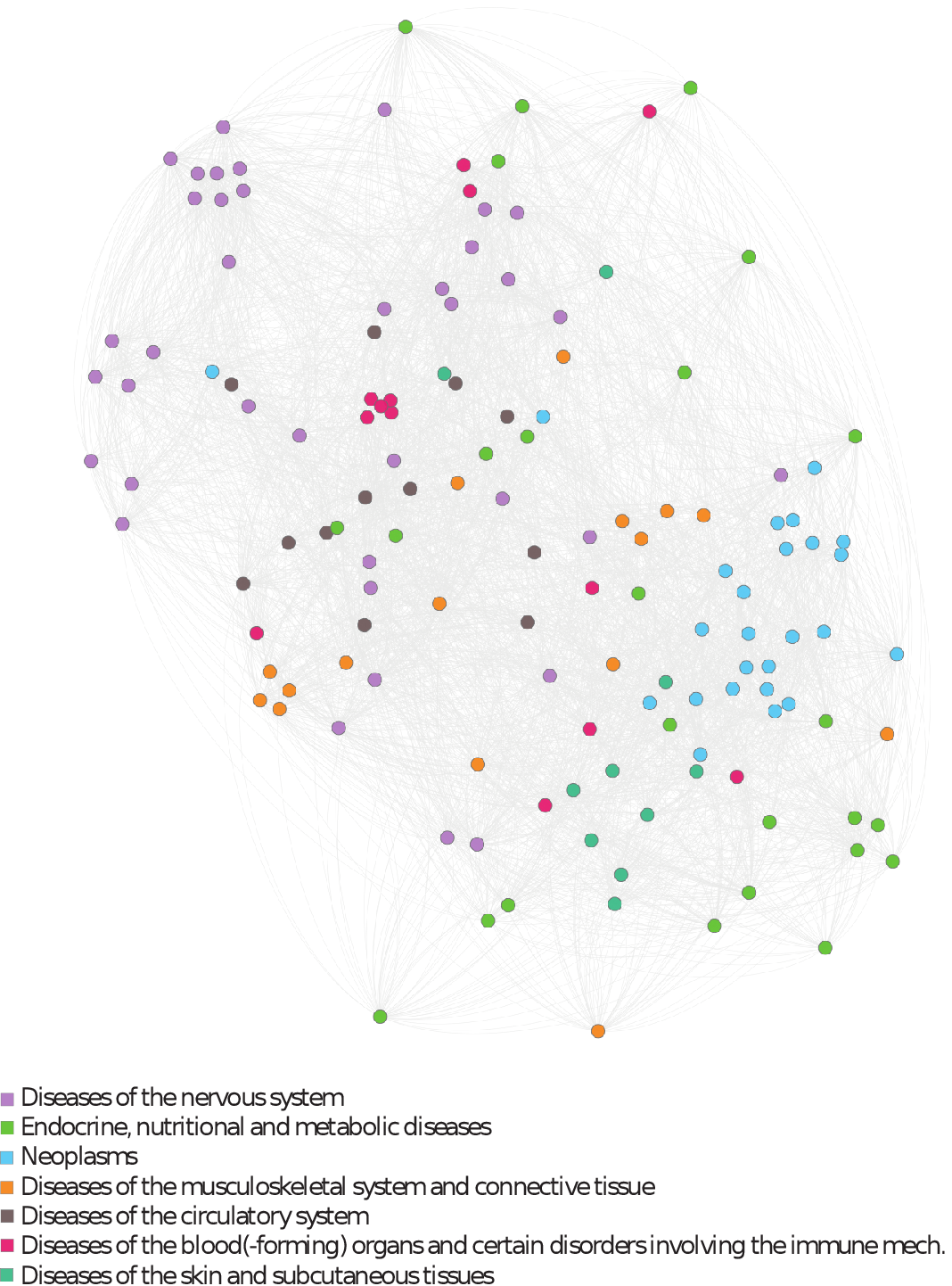}
  \caption{Cosine-similarity network among disease representations learnt by GSEM. Node colour indicates disease class. The plot has been produced via Gephi~\citep{bastian2009gephi} with ForceAtlas embedding.}
  \label{fig:networks}
\end{figure*}
\newpage

\section{Acknowledgements}

We gratefully acknowledge the ERC-Consolidator Grant No. 724228 (LEMAN) for the support to MB and GG, the Vodafone Foundation supporting KV, IL and GG as part of the ongoing DreamLab/DRUGS project and the Imperial NIHR Biomedical Research Center for prospective clinical trials for the support to MB, KV, IL and GG.

AP and DG were supported in part by Biotechnology and Biological Sciences Research Council (BBSRC), grants BB/K004131/1, BB/F00964X/1 and BB/M025047/1 to AP, CONACYT Paraguay Grant INVG01-112 (14-INV-088) and PINV15-315 (14-INV-088), and NSF Advances in Bio Informatics grant 1660648.

FF was supported by the SNF Grant No. 200021E/176315. 

\bibliographystyle{plainnat}
\bibliography{neurips_2019}

\end{document}